\newtheorem{prop}{Proposition}
\newcommand{\code}[1]{\textsf{\small #1}}
\newcommand{\mux}{\mu}
\title{A Retrieve-and-Edit Framework for \\ Predicting Structured Outputs}
\author{
  Tatsunori B. Hashimoto\\
  Department of Computer Science\\
  Stanford University\\
  \texttt{thashim@stanford.edu}\\
  \And
  Kelvin Guu\\
    Department of Statistics\\
  Stanford University\\
  \texttt{kguu@stanford.edu}\\
  \And
  Yonatan Oren\\
    Department of Computer Science\\
  Stanford University\\
  \texttt{yonatano@stanford.edu}\\
  \And
  Percy Liang\\
    Department of Computer Science\\
  Stanford University\\
  \texttt{pliang@cs.stanford.edu}
}
\begin{document}

\maketitle

\begin{abstract}
For the task of generating complex outputs such as source code,
editing existing outputs can be easier than generating complex outputs from scratch.
With this motivation,
we propose an approach that first retrieves a training example based on the
input (e.g., natural language description) and then edits it to the desired
output (e.g., code).
Our contribution is a computationally efficient method
for learning a retrieval model that embeds the input in a task-dependent way
without relying on a hand-crafted metric or incurring the expense of jointly
training the retriever with the editor.
Our retrieve-and-edit framework can be applied on top of any base model.
We show that on a new autocomplete task for GitHub Python code and the Hearthstone cards benchmark,
retrieve-and-edit significantly boosts the performance of a vanilla sequence-to-sequence model on both tasks.

\end{abstract}

\section{Introduction}
In prediction tasks with complex outputs, generating well-formed outputs is challenging,
as is well-known in natural language generation \cite{li2017adversarial,shao2017generating}.
However, the desired output might be a variation of another, previously-observed example \cite{guu2018edit, gu2017search, song2016retrieval, kuznetsova2013generalizing, mason2014domain}.
Other tasks ranging from music generation to program synthesis exhibit the same phenomenon: many songs
borrow chord structure from other songs, and software engineers
routinely adapt code from Stack Overflow.

Motivated by these observations, we adopt the following \emph{retrieve-and-edit} framework (Figure \ref{fig:pedfig}):
\begin{enumerate}
  \item \textbf{Retrieve:} Given an input $x$, e.g., a natural language description `Sum the first two elements in \texttt{tmp}', we use a \emph{retriever}
    to choose a similar training example $(x', y')$, such as `Sum the first 5 items in \texttt{Customers}'.
\item \textbf{Edit:} We then treat $y'$ from the retrieved example as a ``prototype'' and use an
  \emph{editor} to edit it into the desired output $y$ appropriate for the input $x$.
\end{enumerate}


\begin{figure}[hb]
  \centering
  \includegraphics[scale=0.35]{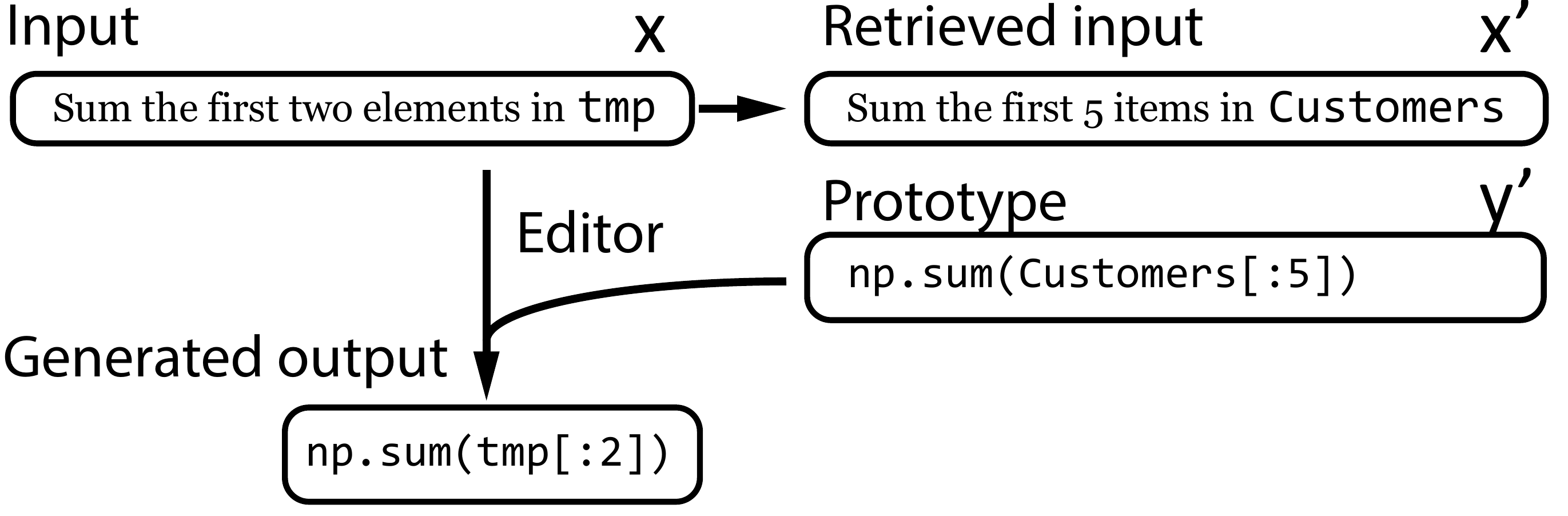}
  \caption{
    The retrieve-and-edit approach consists
    of the retriever,
    which identifies a relevant example from the training set,
    and the editor,
    which predicts the output conditioned on the retrieved example.
  }
  \label{fig:pedfig}
\end{figure}

While many existing methods combine retrieval and editing
\cite{gu2017search, song2016retrieval,kuznetsova2013generalizing,mason2014domain},
these approaches rely on a fixed hand-crafted or generic retrieval mechanism.
One drawback to this approach is that designing a task-specific retriever is time-consuming,
and a generic retriever may not perform well on tasks where $x$ is structured or complex \cite{yan2016learning}.
Ideally, the retrieval metric would be learned from the data in a task-dependent way:
we wish to consider $x$ and $x'$ similar only if their corresponding \emph{outputs} $y$ and $y'$ differ by a small, easy-to-perform edit.
However, the straightforward way of training a retriever jointly with the editor would require summing over all possible $x'$ for each example,
which would be prohibitively slow.


In this paper, we propose a way to train a retrieval model that is optimized for the downstream edit task.
We first train a noisy encoder-decoder model, carefully selecting the noise and embedding space to ensure that inputs that receive similar embeddings can be easily edited by an oracle editor.
We then train the editor by retrieving according to this learned metric. The main advantage of this approach is that it is computationally efficient and requires no domain knowledge other than an encoder-decoder model with low reconstruction error.

We evaluate our retrieve-and-edit approach on a new Python code autocomplete dataset of 76k functions, where the task is to
predict the next token given partially written code and natural language description.
We show that applying the retrieve-and-edit framework to a standard sequence-to-sequence model
boosts its performance by 14 points in BLEU score \cite{papineni02bleu}. Comparing retrieval methods, learned retrieval improves over a fixed, bag-of-words baseline
by 6 BLEU.
We also evaluate on the Hearthstone cards benchmark \cite{ling2016latent}, where systems must predict a code snippet based on card properties and a natural language description. We show that augmenting a standard sequence-to-sequence model with the retrieve-and-edit approach improves the model by 7 BLEU and outperforms the best non-abstract syntax tree (AST) based model by 4 points.

\section{Problem statement}
\newcommand{\pedit}{p_{\text{edit}}}
\newcommand{\pret}{p_{\text{ret}}}
\newcommand{\hatpret}{\hat p_{\text{ret}}}
\newcommand{\pmodel}{p_{\text{model}}}
\newcommand{\pdata}{p_{\text{data}}}
\newcommand{\pstar}{\pedit^*}

\paragraph{Task.}
Our goal is to learn a model $\pmodel(y \mid x)$ that predicts an output
$y$ (e.g., a 5--15 line code snippet) given an input $x$ (e.g., a natural language description)
drawn from a distribution $\pdata$. See Figure~\ref{fig:pedfig} for an illustrative example.

\paragraph{Retrieve-and-edit.}
The retrieve-and-edit framework corresponds to the following generative process:
given an input $x$, we first retrieve an example $ (x', y')$ from the training
set $\mathcal{D}$ by sampling using a \emph{retriever} of the form $\pret((x',y') \mid x)$.
We then generate an output $y$ using an \emph{editor} of the form $\pedit(y \mid x, (x',y'))$.
The overall likelihood of generating $y$ given $x$ is 
\begin{align}\label{eq:retmarg}
\pmodel(y \mid x) = \sum_{(x',y') \in \mathcal{D}} \pedit(y \mid x, (x',y')) \pret((x',y') \mid x),
\end{align}
and the objective that we seek to maximize is
\begin{align}
  \label{eq:objective}
  \mathcal{L}(\pedit, \pret) \defeq \E_{(x,y)\sim \pdata}\left[ \log\pmodel (y \mid x)\right].
\end{align}

For simplicity, we focus on deterministic retrievers, where $\pret((x',y') \mid x)$ is a point mass on a particular example $(x',y')$. This matches the typical approach for retrieve-and-edit methods, and we leave extensions to stochastic retrieval \cite{guu2018edit} and multiple retrievals \cite{gu2017search} to future work.

\paragraph{Learning task-dependent similarity.}
As mentioned earlier, we would like the retriever to incorporate \emph{task-dependent similarity}:
two inputs $x$ and $x'$ should be considered similar only if the editor has a high likelihood of editing $y'$ into $y$. The optimal retriever for a fixed editor would be one that maximizes the standard maximum marginal likelihood objective in equation \eqref{eq:retmarg}.

An initial idea to learn the retriever might be to optimize for maximum marginal likelihood using standard approaches such as gradient descent or expectation maximization (EM). However, both of these approaches 
involve summing over all training examples $\mathcal{D}$ on each training iteration, which is computationally intractable.

Instead, we break up the optimization problem into two parts.
We first train the retriever in isolation, replacing the edit model $\pedit$ with an \emph{oracle} editor $\pstar$ and optimizing a lower bound for the marginal likelihood under this editor. Then, given this retriever, we train the editor using the standard maximum likelihood objective.
This decomposition makes it possible to avoid the computational difficulties of learning a task-dependent retrieval metric,
but importantly, we will still be able to learn a retriever that is task-dependent.

\section{Learning to retrieve and edit}

We first describe the procedure for training our retriever (Section \ref{sec:retriever}),
which consists of embedding the inputs $x$ into
a vector space (Section \ref{sec:vaejust}) and retrieving according to this embedding. We then describe the editor and its training procedure, which follows immediately from maximizing the marginal likelihood (Section \ref{sec:editor}).

\subsection{Retriever}\label{sec:retriever}

Sections~\ref{sec:vaejust}--\ref{sec:vaetrain} will justify our training
procedure as maximization of a lower bound on the likelihood;
one can skip to Section~\ref{sec:procedure} for the actual training procedure if desired.

We would like to train the retriever based on $\mathcal L$ (Equation \ref{eq:objective}), but we do not yet know the behavior of the editor.
We can avoid this problem by optimizing the retriever $\pret$, assuming
the editor is the true conditional distribution over the targets $y$ given the
retrieved example $(x',y')$ under the joint distribution $\pret((x',y')\mid x) \pdata(x,y)$.
We call this the \emph{oracle editor} for $\pret$,
\[\pstar(y\mid (x',y')) = \frac{\sum_x \pret((x',y')\mid x)\pdata(x,y)}{\sum_{x,y} \pret((x',y')\mid x)\pdata(x,y)}.\]

The oracle editor gives rise to the following lower bound on $\sup_{\pedit} \mathcal L(\pret, \pedit)$ 
\begin{equation}\label{eq:maxllh}
  \mathcal{L}^*(\pret) := \E_{(x,y)\sim \pdata}[\E_{(x',y')\mid x\sim \pret}[\log \pstar(y\mid (x',y'))]],
\end{equation}
which follows from Jensen's inequality and using a particular editor $\pstar$ rather than the best possible $\pedit$.\footnote{This expression is
the conditional entropy $H(y \mid x',y')$.
An alternative interpretation of $\mathcal{L}^*$ is that maximization with respect
to $\pret$ is equivalent to maximizing the mutual information between $y$ and
$(x',y')$.}
Unlike the real editor $\pedit$, $\pstar$ does not condition on the input $x$
to ensure that the bound represents the quality of the retrieved example alone.

Next, we wish to find a further lower bound that takes the form of a distance minimization problem:
\begin{equation}\label{eq:llower}
  \mathcal{L}^*(\pret) \geq C - \E_{x \sim \pdata}[\E_{x'\mid x \sim \pret} [d(x',x)^2]],
  \end{equation}
where $C$ is a constant independent of $\pret$. The $\pret$ that maximizes this lower bound is the deterministic retriever which finds the nearest neighbor to $x$ under the metric $d$.

In order to obtain such a lower bound, we will learn an encoder $p_{\theta}(v\mid x)$ and decoder $p_{\phi}(y\mid v)$ and use the distance metric in the latent space of $v$ as our distance $d$.
When $p_\theta(v\mid x)$ takes a particular form, we can show that this results in the desired lower bound~\eqref{eq:llower}.

\subsubsection{The latent space as a task-dependent metric}
\label{sec:vaejust}

Consider any encoder-decoder model with a probabilistic encoder $p_\theta(v\mid x)$ and decoder $p_\phi(y\mid v)$. We can show that there is a variational
lower bound that takes a form similar to~\eqref{eq:llower} and decouples $\pret$ from the rest of the objective.
  \newcommand\Lencdec{\mathcal L_\text{reconstruct}}
  \newcommand\Lkl{\mathcal L_\text{discrepancy}}
\begin{prop}\label{prop:varbound}
  For any densities $p_\theta(v\mid x)$ and $p_\phi(y\mid v)$ and random variables $(x,y,x',y') \sim \pret((x',y')\mid x)\pdata(x,y)$,
\begin{align}
  \mathcal{L}^*(\pret) 
  & \geq \underbrace{\E_{(x,y)\sim \pdata}[\E_{v\sim p_{\theta}(v\mid x)}[\log p_{\phi}(y\mid v)]]}_{\defeq \Lencdec(\theta, \phi)} - \underbrace{\E_{x}[\E_{x'\mid x \sim \pret}[\dkl{p_{\theta}(v\mid x)}{p_{\theta}(v\mid x')}]]}_{\defeq \Lkl(\theta, \pret)} \label{eq:ltwo}.
\end{align}
\end{prop}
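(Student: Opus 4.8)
The plan is to recognize $\mathcal{L}^*(\pret)$ as a negative conditional entropy, lower bound it by replacing the intractable oracle editor $\pstar$ with a tractable surrogate built from the encoder $p_\theta$ and decoder $p_\phi$, and then apply the standard evidence lower bound to that surrogate.

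First I would write $q$ for the joint law $\pret((x',y')\mid x)\,\pdata(x,y)$ and observe that $\pstar$ is exactly the conditional $q(y\mid x',y')$: the numerator of $\pstar(y\mid(x',y'))$ is $\sum_x q(x,y,x',y') = q(y,x',y')$ and the denominator is the marginal $q(x',y')$. Hence $\mathcal{L}^*(\pret) = \E_q[\log q(y\mid x',y')]$. For any density $r(y\mid x',y')$ normalized in $y$, the difference $\E_q[\log q(y\mid x',y')] - \E_q[\log r(y\mid x',y')]$ equals $\E_{q(x',y')}[\dkl{q(\cdot\mid x',y')}{r(\cdot\mid x',y')}] \ge 0$, so $\mathcal{L}^*(\pret) \ge \E_q[\log r(y\mid x',y')]$ for every such $r$.

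Next I would take the specific surrogate $r(y\mid x',y') \defeq \int p_\phi(y\mid v)\,p_\theta(v\mid x')\,dv$ --- encode $x'$, then decode --- which is normalized in $y$ (integrating over $y$ first and using $\int p_\phi(y\mid v)\,dy = 1$ leaves $\int p_\theta(v\mid x')\,dv = 1$) and which, like $\pstar$, drops one of its arguments (here $y'$). For fixed $(x,y,x',y')$, Jensen's inequality applied with the variational distribution $p_\theta(v\mid x)$ gives
\[
\log \int p_\phi(y\mid v)\,p_\theta(v\mid x')\,dv = \log \E_{v\sim p_\theta(v\mid x)}\!\left[\frac{p_\phi(y\mid v)\,p_\theta(v\mid x')}{p_\theta(v\mid x)}\right] \ge \E_{v\sim p_\theta(v\mid x)}[\log p_\phi(y\mid v)] - \dkl{p_\theta(v\mid x)}{p_\theta(v\mid x')}.
\]
Taking $\E_q[\cdot]$ of both sides and using linearity splits the right-hand side into two pieces: the first depends only on $(x,y)$, whose $q$-marginal is $\pdata$, so it equals $\Lencdec(\theta,\phi)$; the second depends only on $(x,x')$, and averaging it over $q$ reproduces $\E_x[\E_{x'\mid x\sim\pret}[\dkl{p_\theta(v\mid x)}{p_\theta(v\mid x')}]] = \Lkl(\theta,\pret)$. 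Chaining this with the previous inequality gives the claim.

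I do not expect a serious obstacle: the whole argument is two successive uses of Jensen / non-negativity of KL. The step that takes a little thought is the construction of the surrogate $r$ --- it must be a genuine density in $y$ (hence the Fubini step), and it must route $x'$ (not $x$) through the decoder while the ELBO's variational distribution stays $p_\theta(v\mid x)$; that asymmetry is precisely what turns the ELBO's KL term into the discrepancy between the two encoders $p_\theta(v\mid x)$ and $p_\theta(v\mid x')$ instead of having it vanish. I would also flag at the outset the mild regularity assumptions (densities against a common base measure, finiteness of the relevant integrals) needed to interchange integrals and invoke Jensen.
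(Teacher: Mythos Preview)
Your proposal is correct and follows essentially the same two-step route as the paper: first replace $\pstar$ by the surrogate $r(y\mid x',y')=\int p_\phi(y\mid v)\,p_\theta(v\mid x')\,dv$ via non-negativity of KL (the paper phrases this as ``$\pstar$ is the true conditional, so any other distribution has lower expected log-likelihood''), then apply the ELBO with variational distribution $p_\theta(v\mid x)$ and collapse expectations. Your write-up is in fact more explicit than the paper's about why the surrogate is normalized and about the deliberate asymmetry (encoding $x'$ in $r$ but using $p_\theta(v\mid x)$ as the variational distribution) that produces the cross-KL term.
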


\begin{proof}
The inequality follows from standard arguments on variational approximations.
Since $\pstar(y\mid (x',y'))$ is the conditional distribution implied by the
  joint distribution $(x',y',x,y)$, we have:
\[
  \E_{y\mid x',y' \sim \pstar}[\log \pstar(y\mid (x',y'))]
  \geq
  \E_{y\mid x',y'\sim \pstar }\left[ \log \int p_{\phi}(y\mid v)p_{\theta}(v\mid x')dv\right],
\]
where $\int p_{\phi}(y\mid v)p_{\theta}(v\mid x')dv$ is just another distribution.
Taking the expectation of both sides with respect to $(x,x',y')$ and applying law of total expectation yields:
\begin{align}
\mathcal{L}^*(\pret)  \geq \E_{(x,y)\sim \pdata}\left[\E_{(x',y')\mid x\sim \pret}\left[ \log \int p_{\phi}(y\mid v)p_{\theta}(v\mid x')dv\right]\right]\label{eq:lone}.
\end{align}

Next, we apply the standard evidence lower bound (ELBO) on the latent variable
  $v$ with variational distribution $p_\theta(v \mid x)$.
  This continues the lower bounds

\begin{align*}
  &\qquad \geq \E_{(x,y)\sim \pdata}\left[\E_{(x',y')\mid x\sim \pret}\left[  \E_{v \mid x \sim p_{\theta}}\left[\log p_{\phi}(y\mid v)\right] - \dkl{p_{\theta}(v\mid x)}{p_{\theta}(v\mid x')}\right]\right]\\
  &\qquad \geq \E_{(x,y)\sim \pdata}[\E_{v \mid x \sim p_{\theta}}[\log p_{\phi}(y\mid v)]] - \E_{x \sim \pdata}[\E_{x'\sim \pret}[\dkl{p_{\theta}(v\mid x)}{p_{\theta}(v\mid x')}]],
    \end{align*}
    where the last inequality is just collapsing expectations.
  \end{proof}

Proposition \ref{prop:varbound} takes the form of the desired lower bound~\eqref{eq:llower}, since it decouples the reconstruction term $\E_{(x,y)\sim \pdata}\left[ \E_{v \mid x \sim p_{\theta}}\left[\log p_{\phi}(y\mid v)\right]\right]$ from a
discrepancy term $\dkl{p_\theta(v\mid x)}{p_{\theta}(v\mid x')}$. However, there are two differences between the earlier lower bound~\eqref{eq:llower} and our derived result.
The KL-divergence may not represent a distance metric, and there is dependence on unknown parameters $(\theta, \phi)$.
We will resolve these problems next.

\subsubsection{The KL-divergence as a distance metric}

We will now show that for a particular choice of $p_\theta$, the KL divergence $~\dkl{p_\theta(v\mid x)}{p_{\theta}(v\mid x')}$ takes the form of a squared distance metric.
In particular, choose $p_\theta(v\mid x)$ to be
a von Mises-Fisher distribution over unit vectors centered on the output of an encoder $\mux_\theta(x)$:
\begin{align}
  p_\theta(v\mid x) = \text{vMF}(v;\mux_\theta(x),\kappa) = C_{\kappa} \exp\left(\kappa \mux_\theta(x)^\top v \right),
\end{align}
where both $v$ and $\mux_\theta(x)$ are unit vectors, and $C_{\kappa}$ is a normalization constant depending only on $d$ and $\kappa$. The von Mises-Fisher distribution $p_\theta$ turns the KL divergence term into a squared Euclidean distance on the unit sphere (see
the Appendix \ref{sec:kldist}).
This further simplifies the discrepancy term~\eqref{eq:ltwo} to
\begin{align}
\label{eq:vonmises}
  \Lkl(\theta, \pret) = C_\kappa \, \E_{x\sim \pdata}[\E_{x'\sim \pret}[\| \mux_\theta(x) - \mux_\theta(x')\|_2^2]],
\end{align}
The KL divergence on other distributions such as the Gaussian can also be expressed as a distance metric,
but we choose the von-Mises Fisher since the KL divergence is upper bounded by a constant,
a property that we will use next.

The retriever $\pret$ that minimizes \eqref{eq:vonmises}
deterministically retrieves the $x'$ that is closest to $x$ according to
the embedding $\mu_\theta$.
For efficiency, we implement this retriever using a
cosine-LSH hash via the \texttt{annoy} Python library, which we found to be
both accurate and scalable.

\subsubsection{Setting the encoder-decoder parameters $(\theta, \phi)$}
\label{sec:vaetrain}

Any choice of $(\theta, \phi)$ turns Proposition \ref{prop:varbound} into a lower bound of the form~\eqref{eq:llower}, but the bound can potentially be very loose if these parameters are chosen poorly. Joint optimization over $(\theta, \phi, \pret)$ is computationally expensive, as it requires a sum over the potential retrieved examples.
Instead, we will optimize $\theta, \phi$ with respect to a conservative lower bound that is independent of $\pret$.
For the von-Mises Fisher distribution, $\dkl{p_\theta(v\mid x)}{p_{\theta}(v\mid x')} \leq 2 C_\kappa$, and thus
\begin{align*}
          &\E_{(x,y)\sim \pdata}[\E_{v \mid x \sim p_{\theta}}[\log p_{\phi}(y\mid v)]]  -\E_{x\sim \pdata}[\E_{x'\sim \pret}[\dkl{p_{\theta}(v\mid x)}{p_{\theta}(v\mid x')}]]\\
  \geq \, &\E_{(x,y)\sim \pdata}[\E_{v \mid x \sim p_{\theta}}[\log p_{\phi}(y\mid v)]] - 2C_\kappa.
\end{align*}
Therefore,
we can optimize $\theta,\phi$ with respect to this worst-case bound.
This lower bound objective is analogous to the
recently proposed hyperspherical variational autoencoder and is straightforward to train using reparametrization gradients~\cite{davidson2018, guu2018edit, xu2018spherical}. Our training procedure consists of applying minibatch stochastic gradient descent on $(\theta, \phi)$ where gradients involving $v$ are computed with the reparametrization trick.
\subsubsection{Overall procedure}
\label{sec:procedure}

The overall retrieval training procedure consists of two steps:
\begin{enumerate}
\item Train an encoder-decoder to map each input $x$ into an embedding $v$ that can reconstruct the output $y$:
  \begin{align}
  (\hat{\theta}, \hat{\phi}) \defeq \arg\max_{\theta,\phi} \E_{(x,y)\sim \pdata}[\E_{v \mid x \sim p_{\theta}}[\log p_{\phi}(y\mid v)]].
  \end{align}

  \item Set the retriever to be the deterministic nearest neighbor input in the training set under the encoder:
   \begin{align}
     \hatpret(x', y' \mid x) \defeq \mathbf 1[(x', y') = \arg\min_{(x', y') \in \mathcal D} \| \mux_{\hat{\theta}}(x) - \mux_{\hat{\theta}}(x')\|_2^2].
  \end{align}
\end{enumerate}

\subsection{Editor}
\label{sec:editor}
The procedure in Section~\ref{sec:procedure} returns a retriever $\hatpret$ that maximizes a lower bound on $\mathcal L^*$,
which is defined in terms of the oracle editor $\pstar$.
Since we do not have access to the oracle editor $\pstar$, we train the editor
$\pedit$ to directly maximize $\mathcal{L}(\pedit, \hatpret)$.

Specifically, we solve the optimization problem:
\begin{align}
\arg\max_{\pedit} \E_{(x,y) \sim \pdata}[\E_{(x',y') \sim \hatpret}[\log \pedit(y\mid x, (x',y'))]].
\end{align}
In our experiments, we let $\pedit$ be a standard
sequence-to-sequence model with attention and copying \cite{gu2016copying,wu2016google} (see Appendix
\ref{sec:model} for details),
but any model architecture can be used for the editor.

\section{Experiments}
We evaluate our retrieve-and-edit framework on two tasks. First, we consider a code autocomplete task over
Python functions taken from GitHub and show that retrieve-and-edit
substantially outperforms approaches based only on sequence-to-sequence models or retrieval.
Then, we consider the Hearthstone cards benchmark
and show that retrieve-and-edit can boost the accuracy of existing sequence-to-sequence models.

For both experiments, the dataset is processed by standard space-and-punctuation tokenization, and we run the retrieve and edit model with randomly initialized word vectors and $\kappa=500$,
which we obtained by evaluating BLEU scores on the development set of both datasets.
Both the retriever and editor were trained for 1000 iterations on Hearthstone and 3000 on GitHub via ADAM minibatch gradient descent, with batch size 16 and a learning rate of 0.001.

\subsection{Autocomplete on Python GitHub code}
Given a natural language description of a Python function and a partially
written code fragment, the task is to return a candidate list of $k=1,5,10$ next
tokens (Figure \ref{fig:github}). A model predicts correctly if the ground truth token is in the candidate list.
The performance of a model is defined in terms of the average or maximum number of successive tokens
correctly predicted.

\paragraph{Dataset.}
Our Python autocomplete dataset is a representative sample of Python code from GitHub, obtained from
Google Bigquery by retrieving Python code containing at least one
block comment with restructured text (reST) formatting
(See Appendix \ref{sec:sql} for details). 
We use this data to form a code prediction task where each example consists of four inputs: 
the block comment, function name, arguments, and a partially written function body. The output
is the next token in the function body.

To avoid the possibility that repository forks and duplicated library
files result in a large number of duplicate functions, we explicitly
deduplicated all files based on both the file contents and repository path name.
We also removed any duplicate function/docstring pairs
and split the train and test set at the repository level.
We tokenized using space and punctuation
and kept only functions with at most 150 tokens, as
the longer functions are nearly impossible to predict from the docstring.
This resulted in a training set of 76k Python functions.

\paragraph{Results.}
Comparing the retrieve-and-edit model (Retrieve+Edit) to a sequence-to-sequence baseline (Seq2Seq) whose architecture and training procedure matches that of the editor, we find that retrieval adds substantial performance gains on all metrics with no domain knowledge or hand-crafted features (Table~\ref{tab:gitauto}).

We also evaluate various retrievers:
TaskRetriever, which is our task-dependent retriever presented in Section \ref{sec:retriever};
LexicalRetriever, which embeds the input tokens using a bag-of-word vectors and retrieves based on cosine
similarity; and
InputRetriever, which uses the same encoder-decoder architecture as TaskRetriever but modifies the decoder to predict $x$ rather than $y$.
Table~\ref{tab:gitret} shows that TaskRetriever significantly outperforms LexicalRetriever on all metrics,
but is comparable to InputRetriever on BLEU and slightly better
on the autocomplete metrics.
We did not directly compare to abstract syntax tree (AST) based methods here since they do not have a
direct way to condition on partially-generated code, which is needed for autocomplete.

\begin{table}
  \centering
\begin{tabular}{lccccccc}
&\multicolumn{3}{c}{Longest completed length} & \multicolumn{3}{c}{Avg completion length} & BLEU\\
&k=1 &k=5 & k=10 & k=1 & k=5 & k=10 &\\
\hline
  Retrieve-and-edit (Retrieve+Edit) & \textbf{17.6} & \textbf{20.9}  & \textbf{21.9} & \textbf{5.8} & \textbf{7.5} & \textbf{8.1} & \textbf{34.7}\\
  Seq2Seq & 10.6 & 12.5 & 13.2 & 2.5 & 3.4 & 3.8 & 19.2\\
  Retriever only (TaskRetriever) &   13.5 &&&  4.7 &&  & 29.9\\
  \hline
  \end{tabular}
  \caption{Retrieve-and-edit substantially improves the performance over baseline sequence-to-sequence models (Seq2Seq) and trained retrieval without editing (TaskRetriever) on the Python autocomplete dataset. $k$ indicates the number of candidates over beam-search considered for predicting a token, and completion length is the number of successive tokens that are correctly predicted.
  \label{tab:gitauto}
  }
  \end{table}

  \begin{table}
  \centering
\begin{tabular}{lccccccc}
&\multicolumn{1}{c}{Longest completed length} & \multicolumn{1}{c}{Avg completion length} & BLEU \\
\hline
  TaskRetriever & 13.5 &  4.7   & 29.9\\
  InputRetriever & 12.3  & 4.1  & 29.8\\
  LexicalRetriever & 9.8  & 3.4 & 23.1\\
  \hline
  \end{tabular}
  \caption{
    Retrievers based on the noisy encoder-decoder (TaskRetriever)
    outperform a retriever based on bag-of-word vectors (LexicalRetriever).
    Learning an encoder-decoder on the inputs alone (InputRetriever) results in a slight loss in accuracy.
  \label{tab:gitret}
  }
  \end{table}

  \begin{figure}
  \includegraphics[scale=0.4]{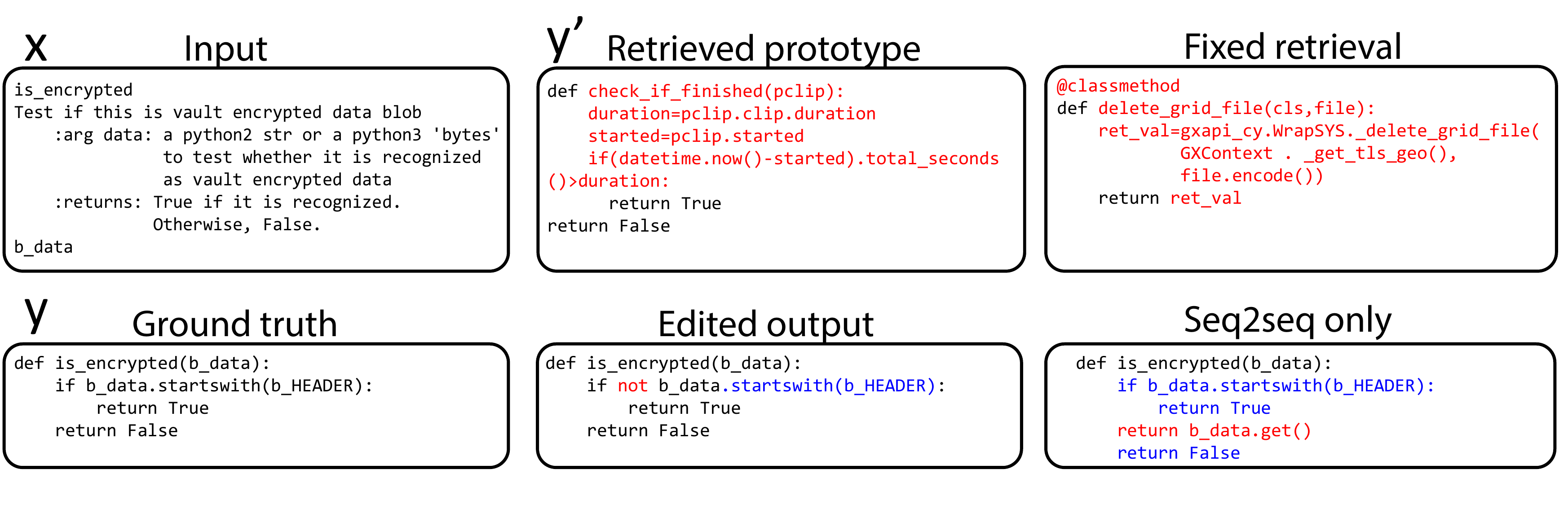}
  \caption{Example from the Python autocomplete dataset along with the retrieved example used during prediction (top center) and baselines (right panels).
    The edited output (bottom center) mostly follows the retrieved example but replaces the conditional with a generic one.
    \label{fig:github}
    }
\end{figure}

Examples of predicted outputs in Figure \ref{fig:github} demonstrate that the docstring
does not fully specify the structure of the output code. Despite this, the
retrieval-based methods are sometimes able to retrieve relevant functions. In the example,
the retriever learns to return a function that has a similar conditional check. Retrieve+Edit
does not have enough information to predict the true function and therefore predicts a generic conditional (\code{if not b\_data}).
In contrast, the seq2seq defaults to predicting a generic getter function rather than a conditional.

\subsection{Hearthstone cards benchmark}
\begin{figure}
  \includegraphics[scale=0.40]{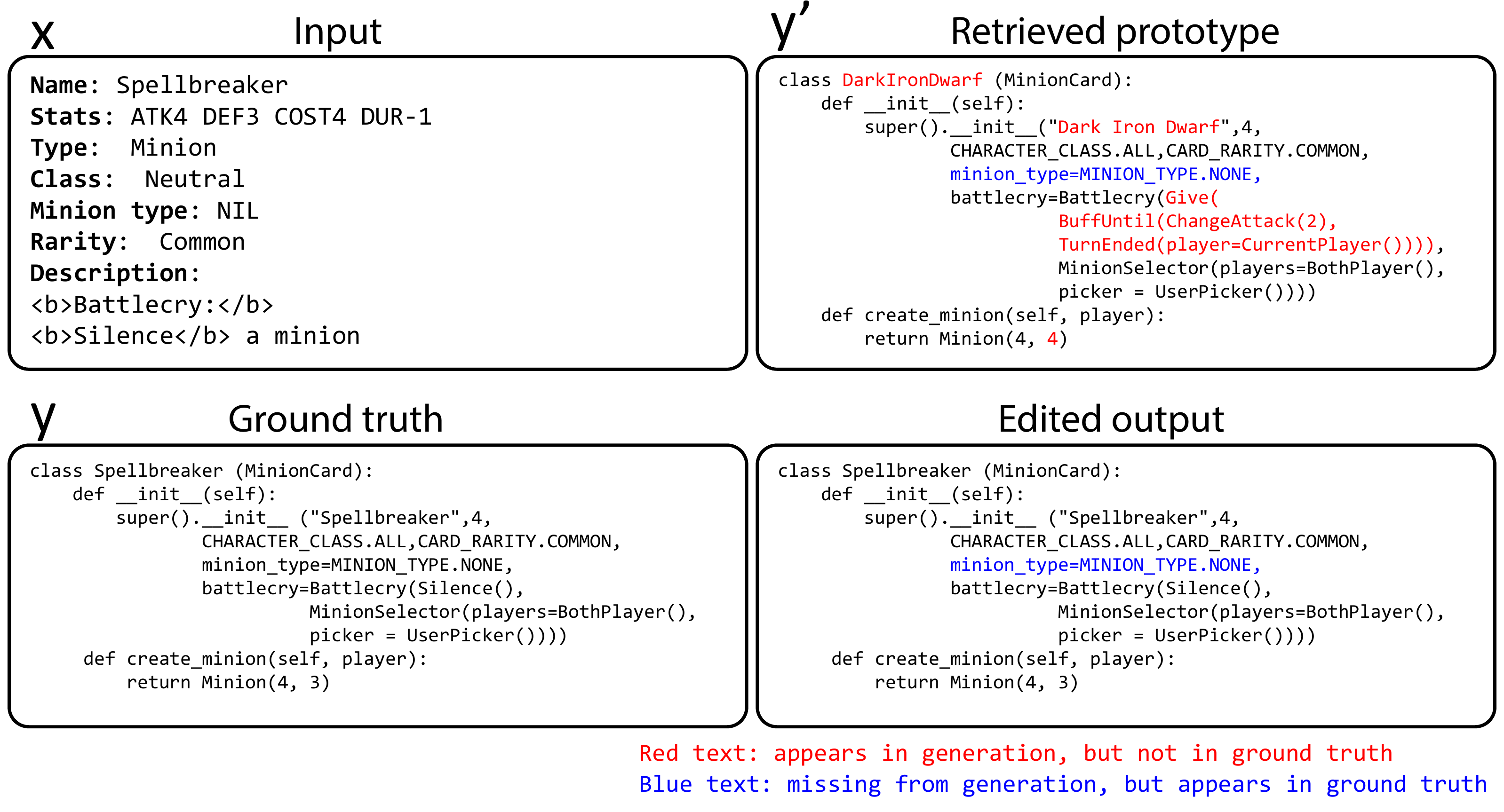}
  \caption{
    Example from the Hearthstone validation set (left panels) and the retrieved example used during prediction (top right). The output (bottom right) differs with the gold standard only in omitting an optional variable definition (\code{minion\_type}).
  }
  \label{fig:hstone}
  \vspace{-5pt}
\end{figure}

\begin{table}
  \centering
\begin{tabular}{lcc}
  \hline
  & BLEU & Accuracy\\
    \hline
  & \multicolumn{2}{c}{AST based}\\
\hline 
Abstract Syntax Network (ASN) \cite{rabinovich2017abstract}  & \textbf{79.2} & \textbf{22.7}\\
Yin \emph{et al}\cite{yin2017syntactic} & 75.8 & 16.2\\
\hline
& \multicolumn{2}{c}{Non AST models}\\
\hline
  Retrieve+Edit (this work) & \textbf{70.0} & \textbf{9.1}\\ 
  Latent Predictor Network \cite{ling2016latent} & 65.6 & 4.5\\
Retriever \cite{ling2016latent} & 62.5 & 0.0\\
  Sequence-to-sequence \cite{ling2016latent} & 60.4 & 1.5\\
Statistical MT \cite{ling2016latent}& 43.2 & 0.0\\
\hline
\end{tabular}
\label{tab:hstone}
  \caption{Retrieve-and-edit substantially improves upon standard sequence-to-sequence approaches for Hearthstone, and closes the gap to AST-based models.
  }
\end{table}

The Hearthstone cards benchmark
consists of 533 cards in a computer card game, where each card is associated
with a code snippet. The task is to output a Python class given a card description.
Figure~\ref{fig:hstone} shows a typical example along
with the retrieved example and edited output.
The small size of this dataset
makes it challenging for sequence-to-sequence models to avoid overfitting to the training set.
Indeed, it has been observed that naive
sequence-to-sequence approaches perform quite poorly \cite{ling2016latent}.

For quantitative evaluation, we compute BLEU and exact match probabilities
using the tokenization and evaluation scheme of \cite{yin2017syntactic}.
Retrieve+Edit provides a 7 point improvement in BLEU over the sequence-to-sequence and retrieval baselines (Table~\ref{tab:hstone})
and 4 points over the best non-AST based method, despite the fact that our editor is a vanilla sequence-to-sequence model.

Methods based on ASTs still achieve the highest BLEU and exact match scores, but we are able to significantly
narrow the gap between specialized code generation techniques and vanilla sequence-to-sequence models if the latter is boosted with the retrieve-and-edit framework.
Note that retrieve-and-edit could also be applied to AST-based models,
which would be an interesting direction for future work.

Analysis of example outputs shows that for the most part, the retriever
finds relevant cards. As an example, Figure~\ref{fig:hstone} shows a retrieved
card (\code{DarkIronDwarf}) that functions similarly to the desired output
(\code{Spellbreaker}). Both cards share the same card type and attributes, both have a battlecry, which is a piece of code
that executes whenever the card is played, and this battlecry consists of
modifying the attributes of another card. Our predicted output corrects nearly all
mistakes in the retrieved output, identifying that
the modification should be changed from \code{ChangeAttack} to \code{Silence}.
The output differs from the gold standard on only one line: omitting the line \code{minion\_type=MINION\_TYPE.none}.
Incidentally, it turns out that this is not an actual semantic error since \code{MINION\_TYPE.none} is the default setting for this field, and the retrieved \code{DarkIronDwarf} card also omits this field. 

\section{Related work}
\paragraph{Retrieval models for text generation.}
The use of retrieval in text generation dates back to early example-based machine translation systems that retrieved and adapted phrases from a translation database \cite{sumita1991experiments}.
Recent work on dialogue generation \cite{yan2016learning, song2016retrieval, wu2018response} proposed a
joint system in which an RNN is trained to transform a retrieved candidate.
Closely related work in machine translation \cite{gu2017search} augments a neural machine translation model
with sentence pairs from the training set retrieved by an off-the-shelf search engine.
Retrieval-augmented models have also been used in image captioning \cite{kuznetsova2013generalizing,mason2014domain}.
These models generate captions of an image via a sentence compression scheme from an initial caption
retrieved based on image context. Our work differs from all the above
conceptually in designing retrieval systems explicitly for the task of
editing, rather than using fixed retrievers (e.g., based on lexical overlap).
Our work also demonstrates that retrieve-and-edit can boost the performance of
vanilla sequence-to-sequence models without the use of domain-specific retrievers.

A related edit-based model \cite{guu2018edit} has also proposed editing examples as a way to augment text generation. However, the task there was \emph{unconditional} generation, and examples were chosen by random sampling.
In contrast, our work focuses on conditional sequence generation with a deterministic retriever, which cannot be solved
using the same random sampling and editing approach. 

\paragraph{Embedding models.}
Embedding sentences using noisy autoencoders has been proposed earlier as a sentence VAE \cite{bowman2016continuous}, which 
demonstrated that a Gaussian VAE captures semantic structure in a latent vector
space. Related work on using the von-Mises Fisher distribution for VAE shows that sentences can also be represented using latent vectors on the unit sphere \cite{davidson2018, guu2018edit, xu2018spherical}.
Our encoder-decoder is based on the same type of VAE,
showing that the latent
space of a noisy encoder-decoder is appropriate for retrieval.

Semantic hashing by autoencoders \cite{krizhevsky2011verydeep} is a related idea where an autoencoder's latent representation is used to construct a hash function to identify similar images or texts \cite{shen2018nash,chaidaroon2017variational}. A related idea is
cross-modal embeddings, which jointly embed and align items in different domains (such as images and captions) using autoencoders \cite{yan2015deep, andrew2013deep, srivastava2012multimodal, feng2014cross}. Both of these approaches seek to learn general similarity metrics between examples for the purpose of identifying documents or images that are semantically similar. Our work differs from these approaches in that we consider \emph{task-specific} embeddings that consider items to be similar only if they are useful for the downstream edit task and derive bounds that connect similarity in a latent metric to editability.

\paragraph{Learned retrieval.}
Some question answering systems learn to retrieve based on supervision of the correct item to retrieve \cite{tan2015lstm, severyn2015learning, lei2016semisupervised}, but these approaches do not apply to our setting since we do not know which items are easy to edit into our target sequence $y$ and must instead estimate this from the embedding.
There have also been recent proposals for scalable large-scale learned memory models \cite{sun2018contextual} that can learn a retrieval mechanism based on a known reward. While these approaches make training $\pret$ tractable for a known $\pedit$, they do not resolve the problem that $\pedit$ is not fixed or known.

\paragraph{Code generation.}
Code generation is well studied \cite{liang10programs,kushman2013regex,balog2016deepcoder,maddison2014structured, allamanis2015bimodal}, but these approaches have not explored edit-based generation.
Recent code generation models have also
constrained the output structure based on ASTs
\cite{rabinovich2017abstract, yin2017syntactic} or used specialized copy
mechanisms for code \cite{ling2016latent}.
Our goal differs from these works in that we
use retrieve-and-edit as a general-purpose method to boost model performance.
We considerd simple sequence-to-sequence models as an example, but the framework is agnostic to the editor and could also be used with specialized code generation models.
Recent work appearing after submission of this work supports this hypothesis by showing that augmenting AST-based models with AST subtrees retrieved via edit distance can boost the performance of AST-based models \cite{hayati2018retrieval}.

\paragraph{Nonparametric models and mixture models.}
Our model is related to nonparametric regression techniques \cite{stone1977}, where in our case, proximity learned by the encoder corresponds to a neighborhood, and the editor is a learned kernel.
Adaptive kernels for nonparametric regression are well-studied \cite{goldenshluger1997spatially} but have mainly focused on learning local smoothness parameters rather than the functional form of the kernel.
More generally, the idea of conditioning on retrieved examples is an instance of a mixture model, and these types of ensembling approaches
have been shown to boost the performance of simple base models on tasks such as language modeling~\cite{chelba2013one}.
One can view retrieve-and-edit as another type of mixture model.

\section{Discussion}
In this work, we considered the task of generating complex outputs
such as source code using standard sequence-to-sequence models augmented by a
learned retriever. We show that learning a retriever
using a noisy encoder-decoder can naturally combine the desire to
retrieve examples that maximize downstream editability with the computational
efficiency of cosine LSH. Using this approach, we demonstrated that our model
can narrow the gap between specialized code generation models and vanilla
sequence-to-sequence models on the Hearthstone dataset, and show substantial
improvements on a Python code autocomplete task over sequence-to-sequence
baselines.

\paragraph{Reproducibility.}
Data and code used to generate the results of this paper are available on the CodaLab Worksheets platform at \url{https://worksheets.codalab.org/worksheets/0x1ad3f387005c492ea913cf0f20c9bb89/}.

\paragraph{Acknowledgements.}
This work was funded by the DARPA CwC program under ARO prime contract no. W911NF-15-1-0462.

\bibliographystyle{abbrv}
\bibliography{refdb/all.bib}
\newpage
\appendix
\section{KL divergence between two von Mises-Fisher distributions}
\label{sec:kldist}
Here, we show that the KL divergence between two vMF distributions with the same
concentration parameter $\kappa$ is proportional to the squared Euclidean distance
between their respective direction vectors.
\begin{prop}\label{prop:klembed}
 Let $\mu_1,\mu_2 \in \mathbb{S}^{d-1}$, then
 \[\dkl{\text{vMF}(\mu_1,\kappa)}{\text{vMF}(\mu_2,\kappa)} = C_\kappa\norm{\mu_1 - \mu_2}_2^2.\]
 where $C_\kappa = \kappa\frac{I_{d/2}(\kappa)}{2I_{d/2-1}(\kappa)}$.
\end{prop}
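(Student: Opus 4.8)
The plan is to expand the KL divergence from its definition and exploit the fact that two vMF densities with the same concentration $\kappa$ share an identical normalization constant. Write the vMF density on $\mathbb{S}^{d-1}$ as $p_i(v) = c_{d,\kappa}\exp(\kappa\mu_i^\top v)$, where $c_{d,\kappa} = \kappa^{d/2-1}/\big((2\pi)^{d/2}I_{d/2-1}(\kappa)\big)$ depends only on $d$ and $\kappa$, not on the mean direction. Then the $\log c_{d,\kappa}$ terms cancel in $\log p_1(v) - \log p_2(v)$, so
\begin{align*}
\dkl{\text{vMF}(\mu_1,\kappa)}{\text{vMF}(\mu_2,\kappa)} = \E_{v\sim p_1}\!\left[\kappa\mu_1^\top v - \kappa\mu_2^\top v\right] = \kappa\,(\mu_1-\mu_2)^\top \E_{v\sim p_1}[v].
\end{align*}
Thus the problem reduces entirely to computing the mean direction (mean resultant vector) of a vMF variable.

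Next I would invoke the standard fact from directional statistics that for $v\sim\text{vMF}(\mu_1,\kappa)$ the mean is $\E[v] = A_d(\kappa)\,\mu_1$ with $A_d(\kappa) = I_{d/2}(\kappa)/I_{d/2-1}(\kappa)$, where $I_\nu$ is the modified Bessel function of the first kind. To keep the argument self-contained I would derive this: differentiating $\int_{\mathbb{S}^{d-1}}\exp(\kappa\mu_1^\top v)\,dv = c_{d,\kappa}^{-1}$ in $\kappa$ and multiplying through by $c_{d,\kappa}$ gives $\E[\mu_1^\top v] = -\frac{d}{d\kappa}\log c_{d,\kappa}$; by rotational symmetry about $\mu_1$ the component of $v$ orthogonal to $\mu_1$ has zero mean, so $\E[v] = \E[\mu_1^\top v]\,\mu_1$; and simplifying $-\frac{d}{d\kappa}\log c_{d,\kappa}$ with the Bessel recurrence $I_\nu'(\kappa) = I_{\nu+1}(\kappa) + \frac{\nu}{\kappa}I_\nu(\kappa)$ (taking $\nu = d/2-1$) collapses everything to $I_{d/2}(\kappa)/I_{d/2-1}(\kappa)$. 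This Bessel-function bookkeeping is the only step that is not pure linear algebra, and it is the part I would expect to be the main obstacle.

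Finally I would finish with elementary algebra on unit vectors. Since $\|\mu_1\|_2 = \|\mu_2\|_2 = 1$, we have $(\mu_1-\mu_2)^\top\mu_1 = 1 - \mu_1^\top\mu_2$ and $\norm{\mu_1-\mu_2}_2^2 = 2 - 2\mu_1^\top\mu_2$, so $(\mu_1-\mu_2)^\top\mu_1 = \frac{1}{2}\norm{\mu_1-\mu_2}_2^2$. Substituting,
\begin{align*}
\dkl{\text{vMF}(\mu_1,\kappa)}{\text{vMF}(\mu_2,\kappa)} = \kappa\,A_d(\kappa)\,(\mu_1-\mu_2)^\top\mu_1 = \frac{\kappa}{2}\,\frac{I_{d/2}(\kappa)}{I_{d/2-1}(\kappa)}\,\norm{\mu_1-\mu_2}_2^2,
\end{align*}
which is the claimed identity with $C_\kappa = \kappa I_{d/2}(\kappa)/\big(2I_{d/2-1}(\kappa)\big)$. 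I would add a remark that this $C_\kappa$ is the proportionality constant relating KL to squared distance and should not be confused with the density normalization constant $c_{d,\kappa}$, even though the paper reuses the symbol $C_\kappa$ for both.
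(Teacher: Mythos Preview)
Your proposal is correct and follows essentially the same route as the paper: cancel the shared normalization constant so the KL reduces to $\kappa(\mu_1-\mu_2)^\top\E_{v\sim p_1}[v]$, invoke the vMF mean formula $\E[v]=\big(I_{d/2}(\kappa)/I_{d/2-1}(\kappa)\big)\mu_1$, and then rewrite $1-\mu_1^\top\mu_2$ as $\tfrac12\|\mu_1-\mu_2\|_2^2$. The paper simply cites the mean formula as a known fact, whereas you additionally derive it via $\frac{d}{d\kappa}\log c_{d,\kappa}$ and a Bessel recurrence; that extra step is sound but not needed to match the paper's argument.
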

\begin{proof}
  Let $v \sim \text{vMF}(\mu,\kappa)$, then $\E(v) = \frac{I_{d/2}(\kappa)}{I_{d/2-1}(\kappa)}\mu$.
  Further, define $\text{vMF}(v;\mu,\kappa) = Z^{-1}_\kappa \exp(\kappa v^\top \mu)$ then,
  \begin{align*}
  \dkl{\text{vMF}(\mu_1,\kappa)}{\text{vMF}(\mu_2,\kappa)} &= \E[\kappa v^\top \mu_1 - \kappa v^\top \mu_2]\\
                                                         &= \kappa \E[v]^\top\mu_1 - \kappa \E[v]^\top\mu_2 \\
  &=\kappa\frac{I_{d/2}(\kappa)}{I_{d/2-1}(\kappa)}\left(1 - \langle \mu_1, \mu_2 \rangle \right).
\end{align*}

If we let $\kappa\frac{I_{d/2}(\kappa)}{2I_{d/2-1}(\kappa)} = C_\kappa$, then $KL(\text{vMF}(\mu_1,\kappa), \text{vMF}(\mu_2,\kappa)) = C_\kappa \|\mu_1 - \mu_2\|_2^2$.
  \end{proof}



\section{Model details}
\label{sec:model}
  \subsection{Editor}
  \label{ssec:edit}

  Recall that the editor takes an input $x$ and a retrieved example $(x',y')$,
  and outputs a target sequence $y$.
  Our model is a standard seq2seq model, where inputs ($x$,$x'$,$y'$) are encoded using separate LSTMs, and a decoder uses both the encoded vectors and a standard attention mechanism over the inputs $(x,x',y')$ to generate the target sequence ($y$). For completeness, we describe the model and any non-standard components below.

  The encoder is a 2-layer bi-directional LSTM \cite{cho2014statmt}. Each component of the input structure such as the context $x$ and retrieved example $(x',y')$ is LSTM-encoded separately, and an additional linear layer is used to combine the final hidden states.
  
  We employ this architectural scheme for the VAE encoder $\mu(x,y)$, context encoder $\mu(x)$, and edit model encoder. 
  
    The decoder of the editor is a 4-layer LSTM which attends to its input as in \cite{bahdanau2015neural}, where the top layer LSTM's hidden states are used to compute attention over the encoder's LSTM states. This decoder architecture is used in the editor and used in the VAE decoder without the attention mechanism.

    The edit model decoder additionally makes use of a copying mechanism, which enables the neural editor to copy words from its input (in our case, the retrieved example) instead of generating them, by emitting a specialized copy token. This is critical for the slot-filling behavior of our model when making the retrieved function relevant for the current context. The copy mechanism follows the overall approach of \cite{gu2016copying}, with a simplified copy embedding which is defined by position, rather than the hidden states of the encoder. The full details of this copy mechanism are below

    For the edit model, also we augment the dataset by replacing the training example $(x,x',y') \to y$ with the identity map $(x',x',y')\to y'$ with probability $0.1$. Removing this augmentation did not substantially change the results, but did increase sensitivity to the choice of the number of training epochs.

    \subsection{Copying}\label{ssec:copy}
        The base vocabulary is extended with a number of \textit{copy tokens} ($300$ in our implementation, though this number need not be larger than the largest number of tokens in any input sequence.)
    Each copy token corresponds to the action of copying the word at a particular position in the input sequence. 
    The embedding of a word then becomes the concatenation of the vector corresponding to its token in the base vocabulary with the vector of the copy token which corresponds to the word's position in the input sequence.

    At both train and test time, the decoder computes a soft-max over all tokens in the vocabulary (including copy tokens). At train time, the probability of generating a word in the target sequence, and analogously its log-likelihood, is defined as the sum of the probability of generating the base word, and the probability of generating the copy token corresponding to the instance of the word in the input sequence, if such an instance exists. The target soft-max at each time-step places a probability mass of $1$ on both the target word and and the copy token corresponding to its instance in the input sequence, if it exists. 
    At test time, probability mass placed on copy tokens is instead transferred over to their corresponding words, if they exist in the vocabulary as base words. When a copy token is emitted and then conditioned upon, it is replaced with the token of the word to which it corresponds in the input sequence, if it is in the vocabulary.

    \subsection{von Mises-Fisher distribution}
    The von Mises-Fisher distribution with mean $\mu \in \R^{d}$ and concentration $\kappa \in \R_{\geq 0}$ has a probability density function defined by $$\text{vMF}(\boldsymbol{x};\mu, \kappa) = \mathcal{C}_d(\kappa) \exp(\kappa \mu^{T} \boldsymbol{x})$$ with normalization constant $$\mathcal{C}_d(\kappa) = \frac{\kappa^{d/2 - 1}}{(2\pi)^{d/2} \mathcal{I}_{d/2 - 1}(\kappa)}$$ where $I_{\nu}$ denotes the modified Bessel function of the first kind of order $\nu$.

    \section{Github dataset construction}
    \label{sec:sql}

    We scraped the Google Bigquery dataset using the following regex query designed to retrieve all python files of at least 100 bytes containing at least one function with a restructured text (reST) style docstring.

\begin{verbatim}
SELECT *
FROM [bigquery-public-data:github_repos.files] f JOIN [bigquery-public-data:github_repos.contents] c
ON f.id = c.id
WHERE
  REGEXP_MATCH(c.content,r'"""[\s\S]*?returns:[\s\S]*?"""')
HAVING RIGHT(f.path,2) = 'py' AND c.size > 100
LIMIT 5000000
\end{verbatim}

\end{document}